\newtheorem{theorem}{Theorem}
\newtheorem{lemma}{Lemma}
\title{Quantifying Membership Inference Vulnerability via Generalization Gap and Other Model Metrics\thanks{This research was funded by the National Geospatial-Intelligence Agency and was approved for public release as document \#20-571.}}
\author {
    % Authors
        Jason W. Bentley\textsuperscript{\rm 1}~~   
        Daniel Gibney\textsuperscript{\rm 1}~~   
        Gary Hoppenworth\textsuperscript{\rm 1}~~   
        Sumit Kumar Jha\textsuperscript{\rm 2}\\
        }
\begin{document}

\maketitle

\begin{abstract}
We demonstrate how a target model's generalization gap leads directly to an effective deterministic black box membership inference attack (MIA). This provides an upper bound on how secure a model can be to MIA based on a simple metric. 
Moreover, this attack is shown to be optimal in the expected sense given access to only certain likely obtainable metrics regarding the network's training and performance. Experimentally, this attack is shown to be comparable in accuracy to state-of-art MIAs in many cases. %We then show how to utilize the concept of shadow models to gain more information about a target model, leading to even more effective attacks. 
\end{abstract}

Personal data has become a commodity to be bought, sold, and analyzed for making predictions into how people will behave. One primary concern is protecting sensitive personal information against extraction from analytic tools and from aggregate data.
This personal information can include birthdays, medical records, etc., that individuals may wish to not be publicly accessible. A recent trend in utilizing this personal data is the development of application programming interfaces (APIs) that expose a model trained on this data. These APIs are made available to companies and individuals, thus giving them some of the utility of the data, without granting them access to the data directly. Typically, users may query the model on particular inputs and receive the model's output, but cannot access the model directly. Sensitive information is used to train these models, however this sensitive information is not intended to be available to the APIs' users. Unfortunately, this may not always be the case, as APIs utilizing overfit models have been shown to be vulnerable to attackers attempting to extract this information.
This work gives an upper bound on how effective such an attack can be when the attacker has limited knowledge of the model's training and performance. 
%These models are most often created by first choosing a model architecture with unfitted parameters and then applying algorithms to fit the model's parameters by utilizing an enormous data set, the likes of which were previously unavailable(i.e., machine learning). 
%These APIs are readily available and have proven helpful in many applications. 
%Yet, on the other hand, as individuals we still value our privacy and generally wish for our personal information such as birthdays, medical records, etc.,  to not be publicly accessible. 
%This creates two contradictory forces, one that seeks to extract and utilize all information within each data point representing an individual in an effort to make meaningful predictions, and one which aims to obfuscate and protect this information. 
%The question of whether the utility of a model fitted with such data can be maintained while ensuring personal information remains private is a vexing one.

% To formalize this problem mathematically, we first note that set membership represents a simple, yet general enough concept to express many of the properties that a data point may possess. That is,  knowing whether a given data point is a member of a specific set is likely expressive enough to capture whatever property one wishes to specify. In this context, membership inference arises when one cannot determine this membership directly from the data point but rather has to infer this by using the aforementioned model. We follow others by making 
% the set
% of interest the data points that are used to fit the model's parameters. Specifically, 
In a membership inference attack, we consider the attacker as having access to all input-label data points $z = (x,y)$, the target model's output on input $x$, which we denote as $M(x)$, and some auxiliary information regarding the target model's performance (specified later). The attacker then aims to determine whether $z$ belongs to the training set used to fit target model $M$'s parameters. More details are provided in the Attack Setting section. We note that the attack outlined in this paper is still effective in the more general context of determining membership within an arbitrary set. It could be employed for an arbitrary set as long as the attacker has similar auxiliary information for the set of interest. 
%accuracy of the model within and outside the set, and the relative proportion of data points that lie within the set as well. 
%But, since the information training and testing accuracy of a model is information which seems more likely to be readily available, membership inference for the training set is the most natural candidate.

For membership inference, this paper provides a simple and theoretically optimal attack strategy (under certain conditions). 
Although our attack requires auxiliary information regarding the target model's performance, the information needed by the attacker could be obtained either through knowledge of how the model is created or by `modeling the model' through the creation of \textit{shadow models}, first introduced in \cite{DBLP:conf/sp/ShokriSSS17}. Thanks to the optimality of our method, we obtain a bound on the effectiveness of any attack,
provided the attacker does not possess additional information beyond what we assume.
%provided the attacker has access to the same information. 
We then compare our strategy to  state-of-the-art membership inference attacks that have access to potentially much more information than our attack. Our results help to shed light on which information is likely being used in these membership inference attacks, and thus reveal which techniques, if any, might be useful in shielding against these attacks.

\subsection{Related Work}
\label{sec:related_work}
The topic of membership inference attacks has been of growing interest within the past decade. \emph{Differential privacy} has been one of the major tools used to try to formalize the notion of being secure against membership inference. Although the topic of Differential Privacy is quite old and dates back many decades, the formalized concept of $\varepsilon$-Differential Privacy was introduced by Dwork et al., 2006 in \cite{DBLP:conf/tcc/DworkMNS06} and lead to an abundance of related research 
%in a wide range of topics from combinatorial optimimization , 
including its application to machine learning~\cite{DBLP:journals/corr/JiLE14, DBLP:conf/ccs/AbadiCGMMT016, DBLP:journals/ml/PhanWD17, DBLP:conf/sp/Yu0PGT19}. A typical idea behind the application of differential privacy is to apply Gaussian mechanisms to the training procedure (or the objective function~\cite{DBLP:journals/jpc/WangKL19} used to train  the network). This can be mathematically proven to provide some level of security (see \cite{DBLP:conf/icml/BalleW18} for a work devoted specifically to Gaussian mechanisms). In practice, it also leads to a loss in performance and so trade-offs have to be made between privacy, training time, and accuracy. Importantly, our approach only considers the generalization gaps between the training and testing accuracy on various subsets of the data. It follows that \emph{whatever protection from our attack is afforded through differential privacy techniques is only the result of affecting the accuracy of the model}.

This is not the first result along the lines of demonstrating that security comes at a cost of affecting usefulness. Another metric of privacy is \emph{k-Anonymity}~\cite{DBLP:journals/ijufks/Sweene02}. Under this metric, one is concerned with whether a record is indistinguishable from at least $k$ other data points in the data set. It was shown in \cite{DBLP:conf/vldb/Aggarwal05} that k-anonymity requires a significant amount of information loss, particularly for higher dimensional data. Further, it was shown in \cite{DBLP:conf/kdd/BrickellS08} that the removal of member-identifying attributes is typically more effective than most k-anonymity methods.

An approach to membership inference that is working in the opposite direction of differential privacy or k-anonymity is to design effective attacks. Early approaches to designing these attacks can be seen in \cite{homer2008resolving, DBLP:conf/ccs/0001BHM16, DBLP:conf/focs/DworkSSUV15}. These are largely based on using some form of a distance measure to determine how likely it is that a particular data point is in a subset of the data. One of the most notable recent efforts in this direction is given by Shokri et al. in \cite{DBLP:conf/sp/ShokriSSS17} whose shadow model based attacks are actually used for comparison with the results in this paper. A work by Rahman et al., empirically evaluates how well differential privacy techniques do against these attacks~\cite{DBLP:journals/tdp/RahmanRLM18}.
Their research concludes that to ensure privacy, a significant amount of utility of the model must be lost.

\subsection{Results}
The main result of this work is the explicit and simple attack strategy first given in Algorithm \ref{alg:take-the-typical}, and then expanded in the Categorical Bayesian Take-the-Typical Attack section. As formalized in Theorem \ref{thm:optimal}, we will see that this strategy is provably optimal given only access to a model's training accuracy, testing accuracy, and the proportion of data used for the training set within each subset of the data formed by the partition. We emphasize that this strategy is also easy to implement. In the simplest case where the partition consists of only one part, the expected accuracy, recall, and precision of the method have explicit formulas given in the Metrics section. %Experimentally, when comparing the state-of-the-art attacks of \cite{DBLP:conf/sp/ShokriSSS17} to the theoretical values we can compute, the results are very often nearly the same. 

We note that the simplicity of our approach does not come at the cost of poor performance. \textit{Experimentally, we find that the accuracy of our membership inference attack is very often comparable with that of the state-of-the-art attacks of \cite{DBLP:conf/sp/ShokriSSS17}, despite our attack not requiring a trained attack model}.

%This indicates that their attack model is often extracting little more useful information than the training and testing accuracy. 

\section{Background}

\subsection{Attack Setting}
\label{sec:setting}
Each data point $z$ in our data set $D$ consists of an input $x$ and a true label $y$. We label these as $z = (x,y)$. 
We assume throughout that the attacker has black box access to a target model $M$ so that given a data point $z = (x,y)$ they can obtain the output of the model, $M(x)$, which we initially take as simply being a predicted label. In the Categorical Bayesian Take-the-Typical Attack section, $M(x)$ is generalized to be a probability distribution over all possible labels. For every data point $x$, we assume the attacker has access to the true label $y$. Additionally, we assume the attacker has access to the training and testing accuracy for a  subset of the data, which we will see is not an unreasonable assumption in previously published attacks as well. We lastly assume that the attacker has knowledge of the proportion of overall data is being used for training. Considering that these proportions are often chosen by some commonly known rule of thumb, an attacker could realistically know this as well.

\subsection{Assessment and Previous MIAs}
The assessment of our attack is based on several commonly used statistical metrics. These include accuracy, precision, and recall of the attack. In the simplest case where the data is not further partitioned, each of these have simple formulas that allow for their exact computation. Because these values are compared with the attacks of \cite{DBLP:conf/sp/ShokriSSS17} we briefly outline how those attacks work here.

\textbf{Description of SMA:} The attack of Shokri et al., has two main parts. The first is the creation of shadow models and the second is the creation of an attack model. Within the shadow model creation phase, a collection of models are created which closely imitate the behavior of the target model. For these shadow models one then knows whether or not a particular data point was within its training set. Hence, we can use both the output of the shadow model and a label of `in training set' or 'not in training set' to train an attack model. When given an output of the target model, the attack model can attempt to infer whether the target model's input was in its training set. Although, many of the details of this attack are omitted, this conveys the essence of the attack. For notational convenience, we will call this the Shadow Model Attack which will henceforth abbreviate as SMA.

For assessing the accuracy of the SMA, Shokri et al. set the number of elements in the training set and the number of elements not in the training set to be equal. This is a reasonable assumption, and we adopt it for all experimental evaluations of our attack.
%Indeed, over an unbounded domain of data points where the points are pulled at random it would be hard to have a higher accuracy than an attack which always guesses the point is not in the training set.

\section{Bayesian Take-the-Typical Attack (BTTA)}
\label{sec:BTTA}
%\subsection{Attack Description}

This section outlines the most basic version of the attack where the data set is not partitioned (before considering the partitioning into training and testing data). We first introduce some notation.
\begin{itemize}
    \item The set of inputs is denoted by $X$ and the set of labels is denoted by $Y$.
    \item The data set is $D \subseteq X \times Y$ where an element from $X$ can be used at most once. The set $D$ is partitioned in to the training set $D^{train}$ and the testing set $D^{test}$.
    \item The target model is denoted by $M$, where $M$ is a function $M: X \mapsto Y$.
    \item The probability that a data point $z = (x,y) \in D$ is drawn from the training set $D^{train}$ is given by $q$.
    \item Let $A$ denote the set of data points  $M$ correctly classifies, i.e., 
    $A = \{z=(x,y) \in D :  M(x) = y \}$.
    \item The accuracy of the model $M$ on the training set is notated as $p_0$, i.e.,
    $p_0 
    %= P(M(x) = y \mid z = (x,y) \in D^{train}) 
    = P(z \in A \mid z \in D^{train})$.
    \item The accuracy of the model $M$ on the testing set is notated as $p_1$,  i.e.,
    $p_1 
    = P(z \in A \mid z \in D^{test})$.
\end{itemize}
We will always assume that the model $M$ is more accurate on $D^{train}$ than on $D^{test}$ so that $p_0 \geq p_1$. We are now ready to present what we call the Bayesian Take-the-Typical Attack (BTTA)\footnote{The attack is written to iterate over all elements in $D$, but of course could be applied to only a particular element.}. We name it this in contrast to a even simpler attack we will call Take-the-Typical. In the Take-the-Typical Attack if $q \geq 1/2$ the attacker will always report that the data point is in the training set and if $q < 1/2$ the attacker will always report the data point is not in the training set. The Take-the-Typical attack ignores the models behaviour on a input whereas the BTTA exploits it. The derivation of the BTTA is presented as a proof to Lemma \ref{lem:BTTA_derivation}, the statement of which highlights the attack's most salient feature.
\begin{algorithm}[tb]
   \caption{Bayesian Take-the-Typical Attack (BTTA)}
   \label{alg:take-the-typical}
\begin{algorithmic}
   \STATE {\bfseries Input:} Model $M$, $q$, $p_0$, $p_1$, data set $D$
   \FOR{$z = (x,y) \in D$}
   \IF{$M(x) = y$}  
   \IF{$q p_0 \geq (1-q)p_1$}
   \STATE report $x \in D^{train}$
   \ELSE
   \STATE report $x \notin D^{train}$
   \ENDIF
   \ELSIF{$M(x) \neq y$}
   \IF{$q(1-p_0) \geq (1-q)(1-p_1)$}
   \STATE report $x \in D^{train}$
   \ELSE
   \STATE report $x \notin D^{train}$
   \ENDIF
   \ENDIF
   \ENDFOR
\end{algorithmic}
\end{algorithm}

\begin{lemma}
\label{lem:BTTA_derivation}
The Bayesian Take-the-Typical Attack reports $x \in D^{train}$ iff it is more probable $x \in D^{train}$.
\end{lemma}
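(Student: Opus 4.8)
The plan is to recognize that the attack is simply a maximum a posteriori (MAP) decision rule, and to verify directly via Bayes' theorem that each of its two branches reports $x \in D^{train}$ exactly when the posterior probability of membership in $D^{train}$ is at least that of membership in $D^{test}$. Since $D = D^{train} \sqcup D^{test}$, the events $\{z \in D^{train}\}$ and $\{z \in D^{test}\}$ partition the sample space, with prior probabilities $q$ and $1-q$ respectively, so ``more probable that $x \in D^{train}$'' means $P(z \in D^{train} \mid \mathcal{O}) \geq P(z \in D^{test} \mid \mathcal{O})$, where $\mathcal{O}$ is the event observed by the attacker, namely whether $z \in A$.

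First I would split on the observation $\mathcal{O}$. In the case $M(x) = y$, i.e.\ $z \in A$, I would apply Bayes' theorem to write
\[
P(z \in D^{train} \mid z \in A) = \frac{p_0\, q}{P(z \in A)}, \qquad P(z \in D^{test} \mid z \in A) = \frac{p_1\,(1-q)}{P(z \in A)},
\]
using the definitions $p_0 = P(z \in A \mid z \in D^{train})$ and $p_1 = P(z \in A \mid z \in D^{test})$ together with the priors $q$ and $1-q$. The common denominator $P(z \in A)$, which the law of total probability gives as $q p_0 + (1-q) p_1$, is positive and cancels, so the posterior comparison reduces to $q p_0 \geq (1-q) p_1$ --- precisely the condition tested in the first branch of Algorithm \ref{alg:take-the-typical}.

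The second case $M(x) \neq y$, i.e.\ $z \notin A$, proceeds identically after replacing the likelihoods $p_0, p_1$ with their complements $1 - p_0 = P(z \notin A \mid z \in D^{train})$ and $1 - p_1 = P(z \notin A \mid z \in D^{test})$. The common denominator $P(z \notin A)$ again cancels, reducing the posterior comparison to $q(1 - p_0) \geq (1-q)(1 - p_1)$, which matches the second branch. Combining the two cases establishes the ``iff'' in both directions.

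I do not expect a genuine mathematical obstacle here, as the result is a direct MAP computation; the main point requiring care is the bookkeeping of the probability model. Specifically, I would state explicitly that $z$ is drawn according to the prior assigning $\{z \in D^{train}\}$ probability $q$, confirm that the two conditioning events form a partition so that $P(z \in D^{test}) = 1 - q$, and note that the denominators $P(z \in A)$ and $P(z \notin A)$ are positive so that both branches are well defined. I would also flag the tie-breaking convention: because the algorithm uses $\geq$, the phrase ``more probable'' is to be read as ``at least as probable,'' with equality resolved in favor of reporting $x \in D^{train}$.
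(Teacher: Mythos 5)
Your proposal is correct and follows essentially the same route as the paper: apply Bayes' theorem separately in the two observation cases ($z \in A$ and $z \notin A$) and reduce the posterior comparison to the inequalities $qp_0 \geq (1-q)p_1$ and $q(1-p_0) \geq (1-q)(1-p_1)$ tested by the algorithm. The only cosmetic difference is that you compare the two posteriors directly (canceling the common denominator) while the paper compares $P(z \in D^{train} \mid \cdot)$ to $1/2$, which is equivalent since the two posteriors sum to one; your added remarks on positive denominators and the $\geq$ tie-breaking convention are sound points of care that the paper leaves implicit.
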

\begin{proof}
By Bayes rule,
\begin{align*}
&P(z \in D^{train} \mid z \in A)\\
&= \frac{P(z \in A \mid z \in D^{train})P(z\in D^{train})}{ P(z\in A)}\\ &= \frac{qp_0}{qp_0 + (1-q)p_1}.
\end{align*}
Setting this greater or equal to $1/2$ we see that given the correct classification, i.e., that $z \in A$, it is more probable that $z \in D^{train}$ when $qp_0 \geq (1-q)p_1$. 

Similarly,
\begin{align*}
&P(z \in D^{train} \mid  z \notin A)\\
&= \frac{P(z \notin A \mid z \in D^{train})P( z \in D^{train})}{P(z \notin A)}\\ &= \frac{q(1-p_0)}{q(1-p_0) + (1-q)(1-p_1)}.
\end{align*}
Setting this greater or equal to $1/2$, we can see that given the misclassification, i.e., that $z \notin A$, it most probable $z \in D^{train}$ when $q(1-p_0) \geq (1-q)(1-p_1)$. 
\end{proof}

The next theorem is a corollary of the fact that the above attack strategy picks the most likely answer for any given data point. Assuming the information we are provided is correct and the data points we are pulling from are uniformly distributed, there is no advantage to updating any of our information and we have Theorem \ref{thm:optimal}. Any attack that made a choice which was more probable to be incorrect than correct could be improved in expectation by always taking the solution which is more probable to be correct.
\begin{theorem}
\label{thm:optimal}
Given only the access to the model, training and test accuracy, the Bayesian Take-the-Typical Attack is optimal with respect to accuracy.
\end{theorem}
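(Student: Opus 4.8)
The plan is to read Theorem~\ref{thm:optimal} as the assertion that the BTTA is the Bayes-optimal decision rule under $0$--$1$ loss, i.e.\ that it minimizes the probability of an incorrect membership prediction, and to obtain it as a corollary of Lemma~\ref{lem:BTTA_derivation}. First I would pin down what ``optimal with respect to accuracy'' means: an attack is any (possibly randomized) rule $f$ that maps the information available on a data point $z$ to a report in $\{\text{in},\text{not in}\}$, and its accuracy is the probability that the report matches the true membership status of $z$, where $z$ is drawn from $D$ under the stated uniform distribution. The goal is then to show that no such $f$ achieves strictly higher accuracy than the BTTA.

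The key structural step is to argue that, given only $M(x)$, the true label $y$, and the scalars $q$, $p_0$, $p_1$, the only feature of $z$ carrying information about its membership is the single bit $\mathbf{1}[z \in A]$ recording whether $M$ classifies $z$ correctly. I would show this bit is a sufficient statistic: conditioned on it, the posterior probability of membership is completely determined by $q$, $p_0$, $p_1$ (these are exactly the two posteriors computed in the proof of Lemma~\ref{lem:BTTA_derivation}), and the restricted access model gives the attacker nothing further to distinguish points within a category. Consequently any attack reduces to making at most one decision for each of the two observable categories $\{z \in A\}$ and $\{z \notin A\}$.

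Next I would decompose the expected accuracy by the law of total probability over these two categories. For a point in a fixed category $c$, reporting ``in'' succeeds with probability $P(z \in D^{train} \mid c)$ and reporting ``not in'' succeeds with the complementary probability, so the conditional accuracy on $c$ is maximized by reporting whichever membership status has the larger posterior. This is precisely the rule of Lemma~\ref{lem:BTTA_derivation}, which reports ``in'' iff membership is more probable. Since the total accuracy is the sum of the per-category accuracies weighted by $P(c)$, the BTTA maximizes every term simultaneously and hence maximizes the whole. A short final step handles randomization: any randomized rule is a mixture of deterministic ones, so its accuracy is a convex combination of deterministic accuracies and cannot exceed the deterministic optimum that the BTTA attains.

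I expect the main obstacle to be the sufficient-statistic step rather than the arithmetic, since one must justify carefully that under the limited-access assumptions nothing beyond the correct/incorrect bit can refine the posterior, so that the decision genuinely collapses to two cases. The tie cases, where $q p_0 = (1-q)p_1$ or $q(1-p_0) = (1-q)(1-p_1)$, are harmless: both reports are then equally accurate, so the BTTA's fixed tie-breaking choice loses nothing.
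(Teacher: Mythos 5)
Your proposal is correct and follows essentially the same route as the paper: the paper proves Theorem~\ref{thm:optimal} as an immediate corollary of Lemma~\ref{lem:BTTA_derivation}, arguing informally that since the attack picks the more probable membership status in each of the two observable cases ($z \in A$ or $z \notin A$), any attack making a less probable choice could be improved in expectation. Your write-up simply makes this rigorous --- the sufficient-statistic reduction to the correct/incorrect bit, the per-category decomposition of accuracy, the convexity argument for randomized rules, and the tie handling are all careful elaborations of what the paper asserts in a single informal paragraph preceding the theorem.
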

Now that we have shown the optimality of the BTTA we next derive formulas for its accuracy, precision, and recall after which we can relate it to other attacks, namely Shokri et al's SMA.
\subsection{Metrics - Lower Bounds on MIA Vulnerability}
\label{sec:attack_prec_and_accuracy}

The conditional statements in Algorithm \ref{alg:take-the-typical} allow for the four cases to be processed differently. We define them now and will refer to them as Cases 1-4 throughout the paper.

{\bf Case 1.} $qp_0 \geq (1-q)p_1$ and $q(1-p_0) \geq (1-q)(1-p_1)$.

{\bf Case 2.} $qp_0 < (1-q)p_1$ and $q(1-p_0) < (1-q)(1-p_1)$.

{\bf Case 3.} $qp_0 \geq (1-q)p_1$ and $q(1-p_0) < (1-q)(1-p_1)$.

{ \bf Case 4.} $qp_0 < (1-q)p_1$ and $q(1-p_0) \geq (1-q)(1-p_1)$.

Thanks to the following lemma, we need henceforth only address Cases 1-3.
\begin{lemma}
Under the assumption testing accuracy is at least training accuracy, or $p_0 \geq p_1$, Case 4 will never occur.
\end{lemma}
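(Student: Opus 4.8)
The plan is to argue by contradiction: I would assume that both defining inequalities of Case 4 hold simultaneously and show that this forces $p_0 < p_1$, contradicting the standing hypothesis $p_0 \geq p_1$. The two inequalities of Case 4 are $qp_0 < (1-q)p_1$ and $q(1-p_0) \geq (1-q)(1-p_1)$, so the entire task is to extract a comparison between $p_0$ and $p_1$ from them.

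First I would put both inequalities into a ratio form so that the common factor $q/(1-q)$ can be eliminated by transitivity. Dividing the first inequality by $(1-q)p_0$ and the second by $(1-q)(1-p_0)$ (legitimate as long as $0 < q < 1$ and $0 < p_0 < 1$) gives
\[
\frac{1-p_1}{1-p_0} \;\leq\; \frac{q}{1-q} \;<\; \frac{p_1}{p_0}.
\]
Transitivity then yields $\frac{1-p_1}{1-p_0} < \frac{p_1}{p_0}$, and cross-multiplying by the positive quantities $p_0$ and $1-p_0$ reduces, after the $p_0 p_1$ terms cancel, to the strict inequality $p_0 < p_1$. Because the strictness is inherited from the first (strict) Case 4 inequality, this directly contradicts $p_0 \geq p_1$, so the generic case is settled.

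The step I expect to be the only real obstacle is the handling of the degenerate boundary values where the divisions above are not justified, namely $q \in \{0,1\}$ or $p_0 \in \{0,1\}$. These must be dispatched separately: when $p_0 = 0$ the first inequality reads $0 < (1-q)p_1$ while $p_0 \geq p_1$ forces $p_1 = 0$, making it false; the cases $q = 0$ or $q = 1$ correspond to a train/test split that is empty on one side and can be excluded as non-instances of the attack. The subtle remaining corner is $p_0 = p_1 = 1$, where both Case 4 inequalities can technically be satisfied; here I would observe that the model misclassifies no data point, so the branch that invokes Case 4's second condition is never actually reached, and the case is vacuous. An alternative that sidesteps most of this bookkeeping is the division-free route: multiply the first inequality by $(1-p_1) \geq 0$ and the second by $p_1 \geq 0$, chain through the common right-hand side $(1-q)p_1(1-p_1)$, and cancel a factor of $q$ to obtain $p_0 \leq p_1$; this is cleaner but yields only the non-strict conclusion, so it would still require the quick $p_0 = p_1$ check (which forces $q < 1/2$ and $q \geq 1/2$ at once) to close the argument.
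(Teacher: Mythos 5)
Your proof is correct, and it takes a structurally different route from the paper's. The paper argues by a dichotomy on $q$: if $q > 1-q$ it derives $p_1 > p_0$ from the first Case~4 inequality alone, and if $q \leq 1-q$ it derives $p_0 = p_1$ and $q = 1/2$ from the second inequality (plus the standing assumption) and then falsifies the first. You instead fold both Case~4 inequalities into the single chain $\frac{1-p_1}{1-p_0} \leq \frac{q}{1-q} < \frac{p_1}{p_0}$ and read off the strict contradiction $p_0 < p_1$ in one step, at the price of explicit boundary bookkeeping where the divisions fail. That bookkeeping buys you something real: you correctly identify that at $p_0 = p_1 = 1$ with $q < 1/2$ (and likewise at $q=0$ with $p_0=p_1=1$) both Case~4 inequalities \emph{are} satisfied, so the lemma as literally stated has a degenerate exception --- one the paper's own proof silently skips, since its cancellation step ``$q(1-p_0) \geq (1-q)(1-p_1)$ implies $p_0 = p_1$ and $q = 1/2$'' tacitly assumes $1-p_0 > 0$. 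Your vacuity resolution (a model with $p_0 = p_1 = 1$ misclassifies nothing, so the algorithm's $M(x) \neq y$ branch, the only place Case~4's second condition matters, is never executed) is a legitimate patch. Two minor nits: your divisions require only $q < 1$, not $q > 0$ (though $q>0$ is genuinely needed in your division-free variant to cancel the factor of $q$); and in that variant the closing parenthetical ``forces $q < 1/2$ and $q \geq 1/2$ at once'' again implicitly assumes $p_0 < 1$, so it relies on the corner case having already been set aside --- which you did, but it would be worth saying explicitly there.
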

\begin{proof}
If $q > 1-q$, then $qp_0 < (1-q)p_1$ implies $p_1 > p_0$, which is not possible. On the other hand, if $q \leq 1-q$, then $q(1-p_0) \geq (1-q)(1-p_1)$ implies $1-p_0 \geq 1-p_1$ which can only happen if $p_0$ and $p_1$ are equal and $q = 1/2$, but now the first inequality again fails to be satisfied. 
\end{proof}

Lemma \ref{lem:beats_TTA} demonstrates that it can be advantageous to use the training and testing accuracy within the attack. In fact there exists a range of values of $p_0$, $p_1$, and $q$ where $p_0$ and $p_1$ are critical in predicting membership, and where outside this range $p_0$ and $p_1$ are no longer of any use at all in the prediction of set membership. The prior of these is captured by Case 3 and the latter by Cases 1 and 2.

\begin{lemma}
\label{lem:beats_TTA}
The accuracy of Bayesian Take-the-Typical Attack is always better than or equal to $\max \{q, 1-q\}$, the accuracy of Take-the-Typical.
\end{lemma}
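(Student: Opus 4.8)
The plan is to first derive an exact closed form for the accuracy of the BTTA, and then bound it below by $\max\{q,1-q\}$ through an elementary case split on whether $q \geq 1/2$.

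First I would record the four joint probabilities induced by the model and the split, obtained directly from the definitions of $q$, $p_0$, and $p_1$: namely $P(z \in D^{train}, z \in A) = qp_0$, $P(z \in D^{train}, z \notin A) = q(1-p_0)$, $P(z \in D^{test}, z \in A) = (1-q)p_1$, and $P(z \in D^{test}, z \notin A) = (1-q)(1-p_1)$. By Lemma \ref{lem:BTTA_derivation} the attack reports whichever membership status is more probable conditioned on the single observable event, either $\{z \in A\}$ or $\{z \notin A\}$. Its accuracy is therefore the probability that this maximum-a-posteriori choice is correct, which, conditioning on whether $z \in A$, works out to $\max\{qp_0, (1-q)p_1\} + \max\{q(1-p_0), (1-q)(1-p_1)\}$: for each of the two observables the attacker collects the larger of the two joint probabilities consistent with it.

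With this expression in hand the bound is immediate. If $q \geq 1/2$, keeping the ``training'' entry in each maximum gives accuracy $\geq qp_0 + q(1-p_0) = q = \max\{q,1-q\}$; if instead $q < 1/2$, keeping the ``testing'' entry in each maximum gives accuracy $\geq (1-q)p_1 + (1-q)(1-p_1) = 1-q = \max\{q,1-q\}$. Either way the BTTA matches or exceeds the Take-the-Typical accuracy. I expect the only genuine step to be the justification of the accuracy formula, i.e.\ checking that a MAP decision over a single binary observable has accuracy equal to the sum, over the two values of that observable, of the larger compatible joint probability; once that is secured the remaining inequalities are trivial, and they notably hold even without the hypothesis $p_0 \geq p_1$, so that assumption is not needed here.
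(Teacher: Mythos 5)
Your proof is correct, and it takes a genuinely different route from the paper's. The paper proves this lemma by an explicit three-way case analysis: in Case 1 ($qp_0 \geq (1-q)p_1$ and $q(1-p_0) \geq (1-q)(1-p_1)$) the attack coincides with always reporting membership and has accuracy $q$, which adding the two case inequalities shows can only occur when $q \geq 1-q$; Case 2 is symmetric with accuracy $1-q$; and in Case 3 the accuracy $qp_0 + (1-q)(1-p_1)$ is bounded below by each of $q$ and $1-q$ using the respective case inequality. Crucially, the paper's argument only covers Cases 1--3 and leans on a separate lemma excluding Case 4, which is where the hypothesis $p_0 \geq p_1$ enters. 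You instead derive the single closed form $\max\{qp_0,(1-q)p_1\} + \max\{q(1-p_0),(1-q)(1-p_1)\}$ for the MAP decision's accuracy --- which is valid, since the attack's decision rule in Algorithm \ref{alg:take-the-typical} picks exactly the larger joint probability compatible with each of the two observables --- and then bound it uniformly by keeping the ``training'' entries when $q \geq 1/2$ and the ``testing'' entries when $q < 1/2$. Your unified formula automatically subsumes all four cases, including the excluded Case 4, so you are right that $p_0 \geq p_1$ is not needed for this lemma; that is a small but real strengthening over the paper's presentation. What the paper's case-by-case route buys in exchange is the exact per-case accuracy expressions (in particular Equation \ref{eq:case3_acc}), which it reuses directly in the metrics derivations and in the proof of Theorem \ref{thm:lower_bound}, whereas your max-based formula would need to be re-expanded into cases to serve those later purposes.
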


\begin{proof}

{\bf Case 1.} This coincides with Take-the-typical and always reports $z \in D^{train}$ with accuracy $q$. This only occurs when $q \geq 1-q$, showing that accuracy is at least $\max\{q,1-q\}$.

{\bf Case 2.} This coincides with Take-the-typical and always reports $z \notin D^{train}$ with accuracy $1-q$. This only occurs when $1-q > q$, showing that accuracy is at least $\max\{q,1-q\}$.

{\bf Case 3.} The Bayesian Take-the-typical will report $z \in D^{train}$ when $z \in A$ and will report $z \notin D^{train}$ when $z \notin A$. 

The accuracy is then be given by
\begin{align}
P(z \in D^{train} &\land z \in A) + P(z \notin D^{train} \land z \notin A) \nonumber\\ 
&= qp_0 + (1-q)(1-p_1) \label{eq:case3_acc}\\
&> qp_0 + q(1-p_0) = q. \nonumber
\end{align}

where we used $(1-q)(1-p_1) > q(1-p_0)$. Also note that since $qp_0 \geq (1-q)p_1$, we can say
\begin{align*}
P(z \in D^{train} &\land z \in A) + P(z \notin D^{train} \land z \notin A) \\
&=qp_0 + (1-q)(1-p_1)\\ 
&\ge (1-q)p_1 + (1-q)(1-p_1) = 1-q.
\end{align*}
Therefore, the accuracy is at least $\max\{q,1-q\}$.
\end{proof}

The BTTA has the following attack metrics.
\begin{itemize}
\item \textbf{Expected Accuracy:}
We again consider the Cases 1-3. Using the results derived in Lemma \ref{lem:beats_TTA} we can claim that in
\begin{itemize}
    \item Case 1: the expected accuracy is $q$; 
    \item Case 2: the expected accuracy is $1-q$;
    \item Case 3: since we report $z \in D^{train}$ iff $z \in A$ the expected accuracy is given by Equation \ref{eq:case3_acc}.
    \end{itemize}

\item \textbf{Expected Precision:}  
Abbreviating $D^{train}$ as $D^{tr}$, the expected precision of the BTTA is based on the ratio
$$
\tiny
\frac{P(\text{report } z \in D^{tr} \land z \in D^{tr})}{P(\text{report } z \in D^{tr} \land z \in D^{tr}) + P(\text{report } z \in D^{tr} \land z \notin D^{tr})}.
$$
Once again we have Cases 1-3, and in 
\begin{itemize}
    \item Case 1: $\frac{q}{q + (1-q)} = 1$;
    \item Case 2: the precision is not defined since we never report $z \in D^{train}$;
    \item Case 3: Since here we report $z \in D^{train}$ iff $M(x) = y$ the above expression is equal to
    \begin{align}
        &\frac{P(z \in A \land z \in D^{tr})}{P(z \in A \land z \in D^{tr}) + P(z \in A \land z \notin D^{tr})} \nonumber\\
        &= \frac{qp_0}{qp_0 + (1-q)p_1}. \label{eq:case3_prec}
    \end{align}
\end{itemize}
\item \textbf{Expected Recall:}
Lastly, the expected recall is based on the ratio
$$
\tiny
    \frac{P(\text{report } z \in D^{tr} \land z \in D^{tr})}{P(\text{report } z \in D^{tr} \land z \in D^{tr}) + P(\text{report } z \notin D^{tr} \land z \in D^{tr})}.
$$
We have in 
\begin{itemize}
    \item Case 1: the expected recall is $\frac{q}{q+0} =1$;
    \item Case 2: the expected recall is $\frac{0}{0+(1-q)1} = 0$;
    \item Case 3: the expected recall is
    \begin{align*}
        &\frac{P(z \in A \land z \in D^{train})}{P(z \in A \land z \in D^{train}) + P(z \notin A \land z \in D^{train})} \\
        &= \frac{qp_0}{qp_0 + q(1-p_0)}= p_0.
    \end{align*}
\end{itemize}
\end{itemize}
We focusing on the accuracy to prove Theorem \ref{thm:lower_bound}. The aim is to now show that regardless of which ever of the three cases may apply based on our values of $p_0$, $p_1$ and $q$, the accuracy of BTTA is always bound below by the maximum of the three case's expected accuracy.

\begin{theorem}
\label{thm:lower_bound}
Given access to a model with generalization gap $g = p_0 - p_1 \geq 0$ (training accuracy minus testing accuracy) and the ratio of training set to input domain $|A|/|\mathcal{D}| = q$,  there exists a membership inference attack with expected accuracy at least
\begin{align*}
    &\max\{q,1-q,qp_0 + (1-q)(1-p_1)\}\\
    &\geq \max\{q,1-q,\min\{q,1-q\}(1+g)\}
    \geq \frac{1}{2}
\end{align*}
Moreover, given only this information about the model no attack can have higher accuracy. 
\end{theorem}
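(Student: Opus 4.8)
The plan is to recognize this theorem as essentially a repackaging of the case analysis already performed for the BTTA, so I would prove it by stitching together Theorem \ref{thm:optimal}, Lemma \ref{lem:beats_TTA}, and the expected-accuracy computations of the Metrics section. The first step is to take the BTTA (Algorithm \ref{alg:take-the-typical}) as the witnessing attack and argue that its expected accuracy equals the leftmost quantity $\max\{q,1-q,qp_0+(1-q)(1-p_1)\}$. By Lemma 2, every instance falls into Case 1, 2, or 3, and the Metrics section gives the achieved accuracy as $q$, $1-q$, or $qp_0+(1-q)(1-p_1)$ respectively. I would then check that in each case the achieved value is in fact the maximum of these three candidates: in Case 3 this is exactly what the two displayed inequalities in the proof of Lemma \ref{lem:beats_TTA} show, since there the accuracy exceeds $q$ and is at least $1-q$; in Cases 1 and 2 the defining inequalities force $q \geq 1-q$ (resp. $1-q > q$) and simultaneously dominate the Case 3 expression, for instance $q(1-p_0)\geq(1-q)(1-p_1)$ rearranges to $q \geq qp_0+(1-q)(1-p_1)$.

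The second step is the middle inequality, which involves only the third arguments of the two maxima since $q$ and $1-q$ are common to both sides. So I would reduce it to proving $qp_0+(1-q)(1-p_1)\geq \min\{q,1-q\}(1+g)$ with $g=p_0-p_1$, and dispatch this by a two-way split. When $q\leq 1-q$ the claim collapses, after cancelling $qp_0$, to $(1-q)(1-p_1)\geq q(1-p_1)$, i.e.\ $1-q\geq q$; when $q>1-q$ the $(1-q)$-terms cancel and what remains is $p_0(2q-1)\geq 0$. Both are immediate. The final inequality $\geq \tfrac12$ is trivial because $\max\{q,1-q\}\geq \tfrac12$ for every $q\in[0,1]$.

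The optimality claim, that no attack can have higher accuracy, is not a new argument but precisely Theorem \ref{thm:optimal}. Since an attacker with only the model's output (equivalently, the event $z\in A$) together with $p_0$, $p_1$, and $q$ can condition only on whether $z\in A$, and the BTTA already selects the pointwise most probable label by Lemma \ref{lem:BTTA_derivation}, it is optimal in expectation; hence its accuracy, the leftmost quantity above, cannot be exceeded under this information.

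The main obstacle I anticipate is the bookkeeping in the first step: verifying that the per-case accuracy always coincides with the maximum of the three candidates requires carefully re-invoking each case's defining inequalities, and one must be attentive to the boundary situations (such as $p_1=1$ or $q=\tfrac12$) where several candidates become equal and some strict inequalities weaken to equalities. Everything after that reduction is routine algebra.
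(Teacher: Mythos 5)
Your proposal is correct and follows essentially the same route as the paper: witness the bound with BTTA, use the Cases 1--3 accuracies from the Metrics section together with Lemma \ref{lem:beats_TTA} for the first expression, factor out $\min\{q,1-q\}$ for the middle inequality, and invoke Theorem \ref{thm:optimal} for optimality. Your one small departure is an improvement in exposition rather than substance: where the paper establishes the Case 1 and Case 2 dominances $q \geq qp_0+(1-q)(1-p_1)$ and $1-q \geq qp_0+(1-q)(1-p_1)$ by checking that certain systems of inequalities are (essentially) infeasible, you obtain them directly by rearranging each case's defining inequality (e.g.\ $q(1-p_0)\geq(1-q)(1-p_1)$ is literally equivalent to the first), which is cleaner but proves the same facts.
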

\begin{proof}
In each Case $i \in \{1,2,3\}$ we will demonstrate that the lower bounds we derived for Case $i$ in Metrics Section is larger than the lower bounds given for the remaining Cases $\{1,2,3\}-\{i\}$.

For Case 1, $\max\{q,1-q\} = q$ and  $\min\{q,1-q\} = 1-q$. One can check that the only solution to the set of inequalities
\begin{align*}
    q  &< qp_0 + (1-q)(1-p_1) \\
   (1-q)p_1 &\leq qp_0 \\
    (1-q)(1-p_1) &\leq q(1-p_0) \\
    0 &\leq p_1 \leq p_0 \leq 1
\end{align*}
is $q = 1/2$ and $p_0 = p_1$, which sets both arguments of the $\max$ function equal. We conclude that in Case 1, the inequality 
$q \geq qp_0 + (1-q)(1-p_1)$.
must hold.

For Case 2, $\max\{q,1-q\} = 1-q$ and  $\min\{q,1-q\} = q$ and there is no solution to the set of inequalities
\begin{align*}
    1-q  &<  qp_0 + (1-q)(1-p_1) \\
    qp_0 &\leq (1-q)p_1\\
    q(1-p_0) &\leq (1-q)(1-p_1)\\
    0 &\leq p_1 \leq p_0 \leq 1.
\end{align*}
Hence, we can conclude that the inequality
$1-q \geq  qp_0 + (1-q)(1-p_1)$
must hold.

For Case 3, the accuracy is $qp_0 + (1-p_1)$. Combine this inequality with Lemma \ref{lem:beats_TTA} to obtain the proof for the first expression in the inequality. The second part of the inequality follows from  $qp_0 + (1-p_1)\geq \min\{q,1-q\}\cdot (p_0 + 1-p_1) = \min\{q,1-q\}\cdot(1+g)$. 
The fact that no other attack can perform better follows from Theorem \ref{thm:optimal}.
\end{proof}

\subsection{Comparison of BTTA to SMA}
Table \ref{table:compare_1} demonstrates how this most basic BTTA's expected precision compares to the experimentally observed precision of SMA attack in \cite{DBLP:conf/sp/ShokriSSS17}. %The precision values for  BTTA are computed from the formula given in Section \ref{sec:attack_prec_and_accuracy}. 
The value for $q$ used in the SMA experiments is $1/2$ and the values for $p_0$ and $p_1$ for each data set are shown in Table \ref{table:compare_1}. For every data set we fall into Case 3 and hence we use Equation \ref{eq:case3_prec} to compute precision (this is necessarily true when $q = 1/2$ and training accuracy exceeds testing).
What is remarkable is how often the BTTA, which is very simple, does comparably well to the much more complicated approach taken  wellin for SMA. The average difference in performance across all data sets tested is roughly $9\%$, but in cases where the SMA fails to obtain high precision the difference tends to be much smaller. In these cases, the result seems to imply that SMA is extracting little useful information about the behavior of the target model beyond the generalization gap between the training and testing accuracy of the  model.
%of the target model's accuracy on and off of the training set. 

In the next section we take the BTTA a step further, generalizing it to consider the training and testing accuracy on different portions of the partitioned data set. Armed with these new tools, we will return to our comparison with SMA.
\begin{table}
\small
\begin{tabular}{llllll}
 & \multicolumn{2}{c}{Accuracy} & \multicolumn{2}{c}{Precision} & \\
\midrule
Dataset                           & Train & Test & BTTA & SMA & Difference  \\
Adult                                & 0.848         & 0.842        & 0.502          & 0.503            & 0.00122 \\
MNIST                                & 0.984         & 0.928        & 0.515          & 0.517            & 0.00236 \\
Location                             & 1             & 0.673        & 0.598          & 0.678            & 0.0803 \\
Purchase(2)                         & 0.999         & 0.984        & 0.504          & 0.505            & 0.00122 \\
Purchase(10)                        & 0.999         & 0.866        & 0.536           & 0.550             & 0.0143 \\
Purchase(20)                        & 1             & 0.781        & 0.561          & 0.590             & 0.0285 \\
Purchase(50)                        & 1             & 0.693        & 0.591         & 0.860             & 0.269\\
Purchase(100)                       & 0.999         & 0.659        & 0.603          & 0.935            & 0.332 \\
TX hosp. stay                    & 0.668         & 0.517        & 0.564            & 0.657            & 0.0933 
\end{tabular}
\caption{Comparison of SMA experimental results to BTTA theoretical results.}
\label{table:compare_1}
\end{table}

\section{Expanding Bayesian Take-the-Typical Attack Beyond Generalization Gap}

It is natural to ask how much more effective a membership inference attack can be, given more information about the behavior of the model. Suppose for instance that one knows the accuracy of the model on the data points with true label $y$? Or instead, what if one knows the accuracy on the data points which are classified by the model as having label $y$? Or, even more complex, suppose rather than just a label, the model outputs a distribution on the different possible classifications? We would like to be able generalize our attack to all of these cases and more. The easiest way to do this is to reformulate the problem as knowing the information used in Bayesian Take-the-Typical Attack section, training accuracy $p_0$, testing accuracy $p_1$, and proportion in the training set $q$, but for different partitions of the data set $D$. We will first formalize this into an attack and then show how it applies to specific ways of partitioning the data. These formulations of the attack are the ones which we will use in our experiments.

%We generalize our earlier techniques by assuming the the data set $D$ can be partitioned into different sets where we have knowledge of training and testing accuracies on these sets. For example, consider knowledge of how well the model performs on data points with a particular true label, or even how well the model does on objects it classifies as having a certain label. The aim here to be as general as possible while still providing for an effective attack.
%We restate our attack more generally using Bayes rule as
%\begin{align*}
%P(x \in A \mid x, y, M(x) ) &= \frac{P(x, y, M(x) \mid x \in A)p(x\in A)}{P(x, y, M(x))}\\
%&= \frac{qP(x, y, M(x) \mid x \in A)}{P(x, y, M(x))}.
%\end{align*}
%Setting this greater or equal to $1/2$, we can our decision to report will be based on whether or not
%$$
%2qP(x,y,M(x) \mid x \in A) \geq P(x,y,M(x)).
%$$
%In the ideal scenario we would have a joint distribution on the random variables corresponding to $x$, $y$, and $M(x)$.

%In this section we explore some joint distributions an attacker may be able to glean easily from a target model, either through using the model as a black box, or training shadow to act as a proxy for the target model.

%\subsection{Categorical Generalization Gap}

\subsection{Categorical Bayesian Take-the-Typical Attack (CBTTA)}
\label{sec:categorical_BTTA}
Let $(D,P)$ be some (finite) data set of input-label pairs $z = (x,y)$ applied to target model $M$ with probability measure $P$. Partition $D$ into finitely many categories $(D_i)$ with sizes $(d_i)$ for (here, size refers to the ratio of selected data to the entire data set). Then partition each $D_i$ into a training set $D_i^{train}$ of size $d_i^{train}$ and a testing set $D_i^{test}$ of size $d_i^{test}$. Finally, suppose that model $M$ has accuracy $p_i^{train}$ on $D_i^{train}$ and accuracy $p_i^{test}$ on $D_i^{test}$ with $p_i^{train} \ge p_i^{test}$ for $i \le k$.

%In this paper, we investigate two settings: the first setting forms $C_i$ based on the true labels of the input data set $D$, and the second setting forms $C_i$ based on the perceived labels of the outputs of the target model $M$. 

As before, let $A := \{(x,y) \in D : M(x)=y\}$ denote the data on which $M$ is accurate. Then $d_i := P(D_i)$, $d_i^{train} := P(D_i^{train})$,  $d_i^{test} := P(D_i^{test})$, $p_i^{train} := P(A | D_i^{train})$, and $p_i^{test} := P(A | D_i^{test})$.

Note that categorical training proportion $q_i:=P(D_i^{train} | D_i)$, and categorical accuracy $p_i:=P(A | D_i)$ are defined in terms of previous parameters via $ q_i=d_i^{train} / d_i$ and $p_i =  [p_i^{train} d_i^{train} + p_i^{test} d_i^{test}] / d_i$. Furthermore, if $D^{train}:=\cup_{i=1}^k D_i^{train}$ and $D^{test}:=\cup_{i=1}^k D_i^{test}$, then the overall training proportion $q := P(D^{train})$ equals $\sum_{i=1}^k q_i d_i$ and the overall accuracy $p := P(A)$ is given by $p := \sum_{i=1}^k p_i d_i = \sum_{i=1}^k (p_i^{train} d_i^{train} + p_i^{test} d_i^{test})$. 
%(see Appendix 2Bmadelater).

We describe the Categorical Bayesian Take-the-Typical Attack (CBTTA) as follows: Assume that the attacker only has black box access to $M$, knows all data in category $D_i$ and knows parameter values $d_i^{train}$, $d_i^{test}$, $p_i^{train}$, and $p_i^{test}$ for all $i \le k$. Given data $z = (x,y)$, first determine the category $C_j$ which contains $z$; second, apply BTTA (as described above) with category train proportion  $q_j$, training accuracy $p_j^{train}$ and testing accuracy $p_j^{test}$.

\begin{theorem}
Given only access to target model $M$ (as a black box), knowledge of data in category $D_i$, and parameter values $d_i^{train}$, $d_i^{test}$, $p_i^{train}$, and $p_i^{test}$ for all $i \le k$, the Categorical Bayesian Take-the-Typical Attack is optimal with respect to accuracy.
\end{theorem}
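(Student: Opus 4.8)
The plan is to reduce this statement to the single-partition optimality result of Theorem~\ref{thm:optimal} by decomposing the attack's expected accuracy across the categories $(D_i)$. First I would pin down the admissible class of attacks: a (possibly randomized, but WLOG deterministic) rule $\delta$ that maps the information available about a queried point $z=(x,y)$ to a membership report. That information is exactly the category index $j$ (which the attacker can read off, since it knows the data in each $D_i$) together with the bit $\mathbf{1}[z\in A]$ (obtained from the black box by comparing $M(x)$ with $y$). As in the discussion preceding Theorem~\ref{thm:optimal}, points are treated as uniformly/exchangeably drawn, so any attack may be assumed to be a function of the pair $(j,\mathbf{1}[z\in A])$ alone. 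The within-category parameters $q_j=d_j^{train}/d_j$, $p_j^{train}$, and $p_j^{test}$ are all recoverable from the given $d_i^{train},d_i^{test},p_i^{train},p_i^{test}$, so they are available to the attacker for free.

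Next I would write the expected accuracy via the law of total probability over the categories. Since $\{D_i\}$ partitions $D$ and, conditioned on $z\in D_i$, the event $z\in D^{train}$ is equivalent to $z\in D_i^{train}$, I obtain
\begin{align*}
\mathrm{Acc}(\delta)
&= P\big(\delta(z)=\mathbf{1}[z\in D^{train}]\big)\\
&= \sum_{i=1}^k d_i\, P\big(\delta(z)=\mathbf{1}[z\in D_i^{train}] \mid z\in D_i\big).
\end{align*}
Each summand is precisely the accuracy of the restriction of $\delta$ to category $i$, in a world where the training proportion is $q_i$ and the train/test accuracies are $p_i^{train},p_i^{test}$ --- exactly the single-partition setting of Theorem~\ref{thm:optimal}. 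By that theorem (whose core is the Bayes computation of Lemma~\ref{lem:BTTA_derivation}), the rule maximizing the $i$-th summand over all functions of $\mathbf{1}[z\in A]$ is BTTA instantiated with $(q_i,p_i^{train},p_i^{test})$, which is exactly what CBTTA applies on category $i$.

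Finally I would stitch the per-category optima into a global optimum. The key observation is that the objective is additive with nonnegative weights $d_i\ge 0$, and the categories are disjoint, so the restriction of $\delta$ to category $i$ may be chosen independently of its restriction to every other category. Maximizing each summand separately therefore maximizes the whole sum, and CBTTA attains each per-category maximum; hence CBTTA is globally optimal and no admissible attack can exceed its expected accuracy. The \emph{main obstacle} to make rigorous is exactly this decoupling step: one must argue that no cross-category coupling could be exploited by a cleverer attack. This follows because the available information deterministically reveals a point's category, so the optimal report within category $j$ can depend only on category-$j$ quantities, and the per-cell posterior is fixed by the categorical Bayes rule. Once this sufficient-statistic/independence claim is granted, the remainder is a direct application of Theorem~\ref{thm:optimal} to each part together with linearity of expectation.
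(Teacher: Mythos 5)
Your proof is correct and is essentially the paper's argument: the paper proves the same reduction to Theorem~\ref{thm:optimal} in contrapositive form (an attack beating CBTTA overall would have to beat BTTA on some category $D_j$, where only $q_j$, $p_j^{train}$, $p_j^{test}$ are relevant), while you run the identical decomposition directly via the law of total probability and per-category maximization. Your explicit treatment of the decoupling step (the category index being observable, so per-category decisions are independent) is a detail the paper leaves implicit, but it does not constitute a different route.
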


\begin{proof}
If there was an attack that performed better overall on $D$ than CBTTA, then it would also perform better on $D_j$ than BTTA on $D_j$ for some $j \le k$ with only the parameters $q_j$, $p_j^{train}$, and $p_j^{test}$ (the other parameters are irrelevant). However, this contradicts Theorem \ref{thm:optimal}.
\end{proof}

In exchange for more effective attacks we pay the price in added complexity and the loss of nice explicit formulas for accuracy, precision, and recall. Next, we consider specific instances of CBTTA.

\textbf{Partition by True Label (PTL):}
In this attack the categories are defined by the true label, specifically $D_i = \{z = (x,y) \in D : y = i\}$. Assuming that the data set is partitioned into training and testing by sampling at random it is reasonable to assume that $q_i = q$, where $q$ is the overall proportion of data used for training. Obtaining the testing and training accuracies of target model will require a different idea, however. For this we use the idea of shadow models the same as \cite{DBLP:conf/sp/ShokriSSS17}. By training models which mimic the behavior of the target model we can observe in this process the individual training accuracy for each category. A similar tact will be adopted in our other attacks.

\textbf{Partition by Predicted Label (PPL):}
Now we consider the categories as defined by $D_i = \{z = (x,y) \in D : M(x) = i\}$. Like last time the training and texting accuracy can be observed from the shadow model. The only major difference arises from how we obtain the values for $q_i$. Now, we cannot assume $q_i = q$. Instead we use the final trained shadow models, apply them across $D$ and count for each category how many of the data points mapped to that category came from the training set.

\textbf{Partition by True Label Confidence (PTC):}
Like in partition by predicted label, the partitions are determined by the output of the model. 
In this attack the categories are defined by the confidence level assigned to the true label. The interval $[0,1]$ is partitioned into the subintervals $\mathcal{I}_1 = [0,1/n)$, $\mathcal{I}_2 = [1/n,2/n)$,..., $\mathcal{I}_n = [(n-1)/n, 1]$. Letting $M_y(x)$ denote the probability that the model assigns to $x$ having its true label $y$, the partitioning of $D$ is given by 
$D_i = \{z = (x,y) \in D : M_y(x) \in \mathcal{I}_i  \}$.
Testing and training accuracies along relative proportion in the training set can be obtained as above.

\textbf{Partition by Predicted Label Confidence (PPC):}
Here we look at the probabilities assigned (out of $m$ possible labels) to the model's predicted choice which lie in $[1/m, 1]$ (there is always a probability of at least $1/m$ in the prediction vector; otherwise they will not add to $1$) and partition the output into $n$ intervals of equal length $(m-1)/mn$; i.e., $\mathcal{I}_1 = [1/m, 1/m + (m-1)/mn), \mathcal{I}_2 = [1/m + (m-1)/mn, 1/m + 2(m-1)/mn), \hdots, \mathcal{I}_n = [1/m + (n-1)(m-1)/mn, 1]$. Letting $M_j(x)$ denote the probability that the model assigns to $x$ having label $j$, the partitioning is given by $D_i := \{z = (x,y) \in D : \max_j (M_j(x)) \in \mathcal{I}_i \}$ for $i \le n$. The necessary values for training accuracy, testing accuracy, and $q_i$ can all be approximated through shadow models.

\textbf{Combining Partition Techniques:}

The techniques above can be combined to increase the effectiveness of the CBTTA. To do so, you intersect categories from each method to form the categories for the combination. 
Some examples of combinations like Predicted Label with Predicted Label Confidence or True Label with True Label Confidence are reasonable and intuitive to employ because a model may perform differently on data between different confidence levels and between labels. However, it is ill-advised to combine the True Label and the Predicted Label techniques, as it results in trival categories with either zero accuracy or perfect accuracy, which leads the attack becoming TTA.

This list is by no means exhaustive. One interesting consequence of the optimality of CBTTA and the possibility of trying endless combinations of ways to partition the data set is the idea of using the partitions to uncover which information other attacks are exploiting to successfully perform membership inference. We will see that the way the data is partitioned can have a noticeable effect on the performance of CBTTA.

\section{Experimental Evaluation}
Here we present the results from implementations of our Bayesian attack and the state-of-the-art shadow model membership inference attack. All of our experiments attack a target neural network trained on the CIFAR-10 dataset. In order to produce target networks with generalization gaps of different magnitudes, we vary the size of the training set. 

\subsection{Experimental Setup}
\textbf{Data:} We use CIFAR-10, a benchmark dataset for image recognition. CIFAR-10 is made of $60,000$ $32\times32$ color images in $10$ different classes, with $6,000$ images in each class. We train our target model on training sets of sizes $2500, 5000, 10000,$ and $15000$ images from CIFAR-10.
\\
\textbf{Target Model:} Our target model is a convolutional neural network with two convolutional and max pooling layers, two hidden layers of sizes $120$ and $84$, and a SoftMax output. Our activation function is ReLu. We chose our learning rate to be $0.001$ and our maximum number of epochs of training to be $100$.
\\
\textbf{Shadow Model Attack:} We replicated the shadow model attack presented in \cite{DBLP:conf/sp/ShokriSSS17}. For each target model, we trained $10$ shadow models on CIFAR-10 data disjoint from the target training set. These shadow models had the same architecture as the target model and were trained identically. We then collected the outputs of the shadow models on training and testing data to train the attack neural networks. For each category of images in CIFAR-10, we trained an attack neural network to predict whether or not a given image was in the shadow training set. Our attack networks had two hidden layers of size 50 and used ReLu activation functions.

\subsection{Results}
The objective of each attacker is to determine which data points were in the target model's training set. 
We evaluate our attacks and compare the results to the Shadow Model Attack (SMA) by executing these procedures on random samples of the target model's train and test data sets. In our evaluation we use train and test data sets of identical sizes, so that the baseline membership inference attack accuracy for random guessing is 0.5.

\begin{table}
\centering
\begin{tabular}{llll}
                \textbf{Attack}
               & \textbf{Accuracy} & \textbf{Precision} & \textbf{Recall} \\ \hline
\textbf{BTTA}  &        0.746           &       0.663             &         1.000        \\ 
\textbf{PTL}  &        0.746       &         0.663           &        1.000         \\ 
\textbf{PPL} & 0.746 & 0.663 & 1.000 \\
\textbf{PTC} &        0.777           &      0.725              &      0.988      \\ 
\textbf{PPC} &        0.778           &      0.728              &      0.985      \\ 
\textbf{SMA}   &          0.793         &       0.715             &         0.973        \\ 
\end{tabular}
\caption{Performance of Bayesian Attacks and Shadow Model Attack (SMA) on the target model trained on 10,000 CIFAR-10 data points. Baseline accuracy 0.5 (See Categorical Bayesian Take-the-Typical Attack section) for a description of each attack).}
\label{table:attack_table}
\end{table}

\begin{figure}[ht]
\centering
\includegraphics{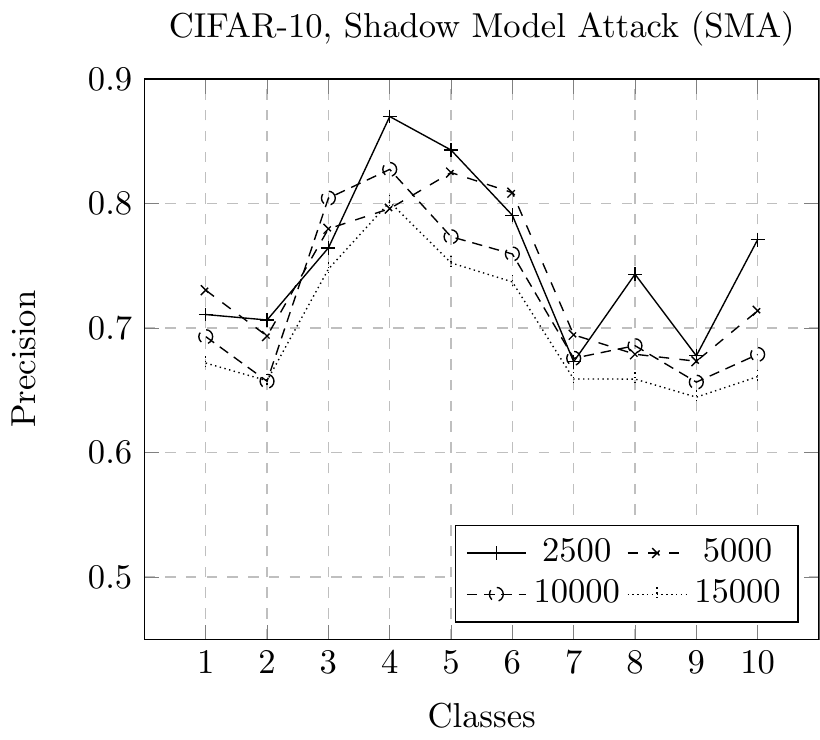}
\includegraphics{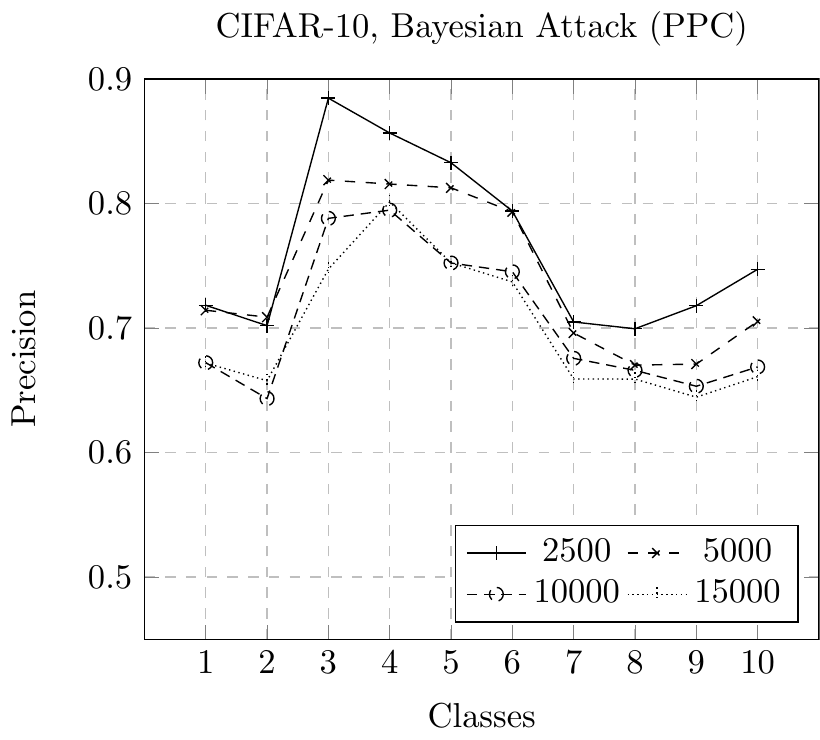}
\caption{Precision of SMA and PPC on CIFAR-10.}
\label{graph:line_graph}
\end{figure}

\begin{figure}[ht]
\centering
\includegraphics{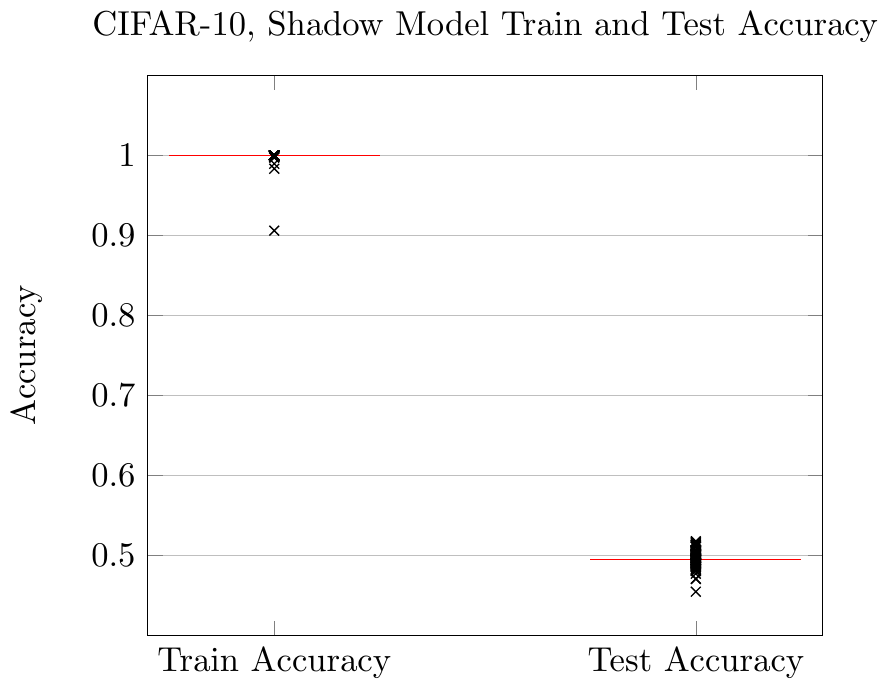}
\caption{Scatter plot showing the training and testing accuracies of the shadow models trained on 10,000 CIFAR-10 images. The training and testing accuracy of the corresponding target model is indicated by the red line.}
\label{figure:train_test_acc}
\end{figure}

% \begin{figure}
% \begin{tikzpicture}
% \begin{axis}[
%     title={CIFAR-10, Shadow Model Train and Test Accuracy},
%     ylabel={Accuracy},
% xmin=0.7, xmax = 2.3,
% ymin=0.4,ymax=1.1,xtick={1,2},xticklabels={Train Accuracy, Test Accuracy},
% ytick={0,0.5, 0.6, 0.7, 0.8, 0.9, 1.0},ymajorgrids]
% \addplot+[only marks,mark=x,color=black] 
%  table {shadow_train.dat};
% \addplot[only marks,mark=x,color=black] 
%  table {shadow_test.dat};
% \addplot[ color=red
%     ]     coordinates {
% (1.75, 0.495)
% (2.25, 0.495)
%     };
% \addplot[ color=red
%     ]     coordinates {
% (0.75, 1.0)
% (1.25, 1.0)
%     };
    
% \end{axis}
% \end{tikzpicture}
% \caption{Scatter plot showing the training and testing accuracies of the shadow models trained on 10,000 CIFAR-10 images. The training and testing accuracy of the corresponding target model is indicated by the red line.}
% \label{figure:train_test_acc}
% \end{figure}

In general, we found the performance of the categorical attacks, and particularly the predicted label partition attack, comparable with the performance of the shadow model attack. This is striking considering that our attack has no parameters or training procedure and only relies on a handful of statistics about the target model.

Interestingly, the different partition schemes in our categorical Bayesian attacks provide only a moderate improvement to the vanilla Bayesian Take the Typical Attack (BTTA). Indeed, in Table \ref{table:attack_table} the Partition by True Label (PTL) and Partition by Predicted Label (PPL) attacks have performance identical to BTTA. We note that if all categories of a partitioned Bayesian attack CA1 are a subset of some category in a partitioned Bayesian attack CA2, then the accuracy of CA1 is no less than that of CA2.
 
For all remaining evaluations of our Bayesian attacks, we use the predicted label partitioning scheme (PPL). We plot the categorical precision of the shadow model attack and the Bayesian attack on all CIFAR-10 data sets in  Figure \ref{graph:line_graph}. We found the recall to be above $0.95$ for all attackers on all data sets.

Not surprisingly, the  accuracies of the Bayesian attack and the Shadow Model Attack of \cite{DBLP:conf/sp/ShokriSSS17} decrease as the target model is trained on larger data sets. As the train set size increases, the target model becomes less overfit, decreasing its generalization gap and its vulnerability to attack. The only information exploited by our Bayesian attack is the categorical generalization gap. Because the Shadow Model Attack performs comparably to the Bayesian attack, it must be extracting the same information as the Bayes attack or information with comparable predictive power. In the
%Figure \ref{figure:train_test_acc} 
CIFAR-10, Shadow Model Train and Test Accuracy Figure (see Appendix) it can be seen that the train and test accuracies of the $100$ shadow models are distributed tightly near the training and testing accuracies of the target model. This suggests that the shadow data the attack networks trained on has the same train and test statistics as the target model, and that this generalization gap is what's being learned by the attack models of \cite{DBLP:conf/sp/ShokriSSS17} during training.

It is worth discussing the different assumptions made by each attack. The Shadow Model Attack assumes we have access to data drawn from the same distribution as the target model's training data, as well as the target model's architecture and training procedure. On the other hand, our Bayesian attack assumes only knowledge of some true statistics of the target model's performance on the train and test sets. This assumption is reasonable in practice as the categorical training and testing accuracy of a machine learning model is frequently made public. 

% \begin{figure}
% \begin{tikzpicture}
% \begin{axis}[
%     title={CIFAR-10, Shadow Model Train and Test Accuracy},
%     ylabel={Accuracy},
% xmin=0.7, xmax = 2.3,
% ymin=0.4,ymax=1.1,xtick={1,2},xticklabels={Train Accuracy, Test Accuracy},
% ytick={0,0.5, 0.6, 0.7, 0.8, 0.9, 1.0},ymajorgrids]
% \addplot+[only marks,mark=x,color=black] 
%  table {shadow_train.dat};
% \addplot[only marks,mark=x,color=black] 
%  table {shadow_test.dat};
% \addplot[ color=red
%     ]     coordinates {
% (1.75, 0.495)
% (2.25, 0.495)
%     };
% \addplot[ color=red
%     ]     coordinates {
% (0.75, 1.0)
% (1.25, 1.0)
%     };
    
% \end{axis}
% \end{tikzpicture}
% \caption{Scatter plot showing the training and testing accuracies of the shadow models trained on 10,000 CIFAR-10 images. The training and testing accuracy of the corresponding target model is indicated by the red line.}
% \label{figure:train_test_acc}
% \end{figure}

\section{Conclusion}
We introduced the Bayesian Take-the-Typical Attack (BTTA), a simple, yet effective, attack. 
The BTTA algorithm requires no trained attack model, unlike many other MIA methods, including the Shadow Model Attack of \cite{DBLP:conf/sp/ShokriSSS17}. 
%This attack Perhaps, of as much value as its practical usage however is that it allows one to access the risk of membership inference, provided they have knowledge of what the attacker knows.
Furthermore, we showed that in restricted settings where the attacker knows only the testing accuracy, training accuracy, the knowledge of the proportion of the training set to the total data, along with the label predicted by the model for all data points, BTTA is a provably optimal attack. 

Additionally, we generalized BTTA to a more sophisticated class of attacks, CBTTA, where the attacker has more knowledge of the model. We again proved this attack is theoretically optimal in terms of expected accuracy, and performed experimental comparisons with a state-of-the-art MIA method. In doing so we were able to observe in which cases the state-of-the-art attack learned more useful information than our CBTTA attacker had access to.

\bibliography{ref}

\newpage
\section*{Appendix A}

{\bf Assumptions Known by Attacker.}\\
For all $i \le k$:
\begin{align*}
D_i& & & \text{i-th category}\\
d_i^{train} &:= P(D_i^{train}) & & \text{size of data in } D_i^{train}\\
d_i^{test} &:= P(D_i^{test}) & & \text{size of data in } D_i^{test}\\
p_i^{train} &:= P(A | D_i^{train}) & & \text{accuracy of } M \text{ in } D_i^{train}\\
p_i^{test} &:= P(A | D_i^{test}) & & \text{accuracy of } M \text{ in } D_i^{test}
\end{align*}

{\bf Other Features.}\\
For all $i \le k$:
\begin{align*}
d_i &:= P(D_i) & & \text{size of data in } D_i \\
q_i &:= P(D_i^{train} | D_i) & & \text{training proportion in } D_i\\
p_i &:= P(A | D_i) & & \text{accuracy of } M \text{ in } D_i \\
q &:= P(D^{train}) & & \text{overall training proportion}\\
p &:= P(A) & & \text{overall accuracy of } M \\
\end{align*}
We provide details for various statements and claims from Section on the Categorical Bayesian Take-the-Typical Attack. %\ref{sec:categorical_BTTA}. 

{\bf Claim 1.} $$d_i =  d_i^{train} + d_i^{test}$$
\begin{align*}
d_i :&= P(D_i) = P(D_i^{train} \cup D_i^{test})\\
&= P(D_i^{train}) + P(D_i^{test}) = d_i^{train} + d_i^{test}. \end{align*}

{\bf Claim 2.} $$q_i d_i = d_i^{train}$$
\begin{align*}
q_i d_i :&= P(D_i^{train}|D_i) P(D_i) =  P(D_i^{train} \cap D_i)\\ 
&= P(D_i^{train}) = d_i^{train}. \end{align*}

{\bf Claim 3.} $$p_i d_i = p_i^{train} d_i^{train} + p_i^{test} d_i^{test}$$
\begin{align*}
p_i d_i :&= P(A|D_i) P(D_i) =  P(A \cap D_i)\\
&= P(A \cap D_i^{train}) + P(A \cap D_i^{test})\\
&= P(A | D_i^{train}) P(D_i^{train}) + P(A | D_i^{test}) P(D_i^{test}) \\
&= p_i^{train} d_i^{train} + p_i^{test} d_i^{test}. \end{align*}

{\bf Claim 4.} $$q =  \sum_{i=1}^k q_i d_i$$
\begin{align*}
q :&= P(D^{train}) = \sum_{i=1}^k P(D_i^{train})  \\
&= \sum_{i=1}^k d_i^{train} = \sum_{i=1}^k q_i d_i.
\end{align*}

{\bf Claim 5.} $$p =  \sum_{i=1}^k p_i d_i$$
\begin{align*}
p :&= P(A) = \sum_{i=1}^k P(A \cap D_i) = \sum_{i=1}^k p_i d_i. \end{align*}

\end{document}